\newtheorem{theorem}{Theorem}
\newtheorem{lemma}[theorem]{Lemma}
\newcommand{\blue}[1]{\textcolor{blue}{#1}}
\definecolor{orange}{rgb}{1.0, 0.22, 0.0}
\title{\LARGE \bf
Motion and Force Planning for Manipulating Heavy Objects by Pivoting
}
\author{Amin Fakhari, Aditya Patankar, and Nilanjan Chakraborty
%\thanks{*This work was not supported by any organization}
\thanks{The authors are with the Department of Mechanical Engineering, Stony Brook University, Stony Brook, NY 11794, USA, {\tt\small \{amin.fakhari, aditya.patankar, nilanjan.chakraborty\}@stonybrook.edu}.}
}
\begin{document}

\maketitle
\thispagestyle{empty}
\pagestyle{empty}

%%%%%%%%%%%%%%%%%%%%%%%%%%%%%%%%%%%%%%%%%%%%%%%%%
\begin{abstract}
Manipulation of objects by exploiting their contact with the environment can enhance both the dexterity and payload capability of robotic manipulators. A common way to manipulate heavy objects beyond the payload capability of a robot is to use a sequence of pivoting motions, wherein, an object is moved while some contact points between the object and a support surface are kept fixed. The goal of this paper is to develop an algorithmic approach for automated plan generation for object manipulation with a sequence of pivoting motions. A plan for manipulating a heavy object consists of a sequence of joint angles of the manipulator, the corresponding object poses, as well as the joint torques required to move the object. The constraint of maintaining object contact with the ground during manipulation results in nonlinear constraints in the configuration space of the robot, which is challenging for motion planning algorithms. Exploiting the fact that pivoting motion corresponds to movements in a subgroup of the group of rigid body motions, $SE(3)$, we present a novel task-space based planning approach for computing a motion plan for both the manipulator and the object while satisfying contact constraints. We also combine our motion planning algorithm with a grasping force synthesis algorithm to ensure that friction constraints at the contacts and actuator torque constraints are satisfied. We present simulation results with a dual-armed Baxter robot to demonstrate our approach.
%Our motion planning approach is a two-step approach. Given the initial and goal poses of the object as well   

\end{abstract}

% \blue{NC: We should emphasize that one usefulness of the way we are doing object gaiting using a 3D motion (instead of the 2D motion of the paper from the Japanese and French group) is that we can move over small obstacles along the path, which the planar methods cannot do. This would also allow us to contrast ourselves from existing pivoting methods, even for gaiting.}

%%%%%%%%%%%%%%%%%%%%%%%%%%%%%%%%%%%%%%%%%%%%%%%%%
\section{Introduction}
\label{sec:Introduction}
Manipulation of heavy and bulky objects is a challenging task for  manipulators and humanoid robots. An object is considered heavy if the manipulator's joint torques are not large enough to balance the object weight while lifting it off the ground. Thus, heavy objects cannot be manipulated with usual pick-and-place strategy due to actuator saturation.
%traditional and most common way of manipulating an object using robot manipulators is the pick-and-place task. In this task, the object must be grasped firmly at some specific points while lifting it off the environment to prevent the object from falling. However, this strategy is not always practical for manipulating heavy and bulky objects due to actuator saturation. 
Consider the manipulation scenario shown in Fig.~\ref{Fig:Motivation}, where a heavy object has to be moved from an initial pose $\mathcal{C}_O$ to a final pose $\mathcal{C}_F$ by a dual-armed robot. The object has to negotiate a step during the manipulation which implies that the final pose cannot be achieved by either pick-and-place strategies or by pushing. One possible way to move the object and negotiate the step is to use a sequence of pivoting motions, which we call object gaiting, and this is a common strategy used by humans to manipulate heavy objects. Therefore, the goal of this paper is to develop an algorithmic approach to compute a plan for manipulating heavy objects by a sequence of pivoting motions.

\begin{figure}[t]
    \centering
    \includegraphics[scale=0.42]{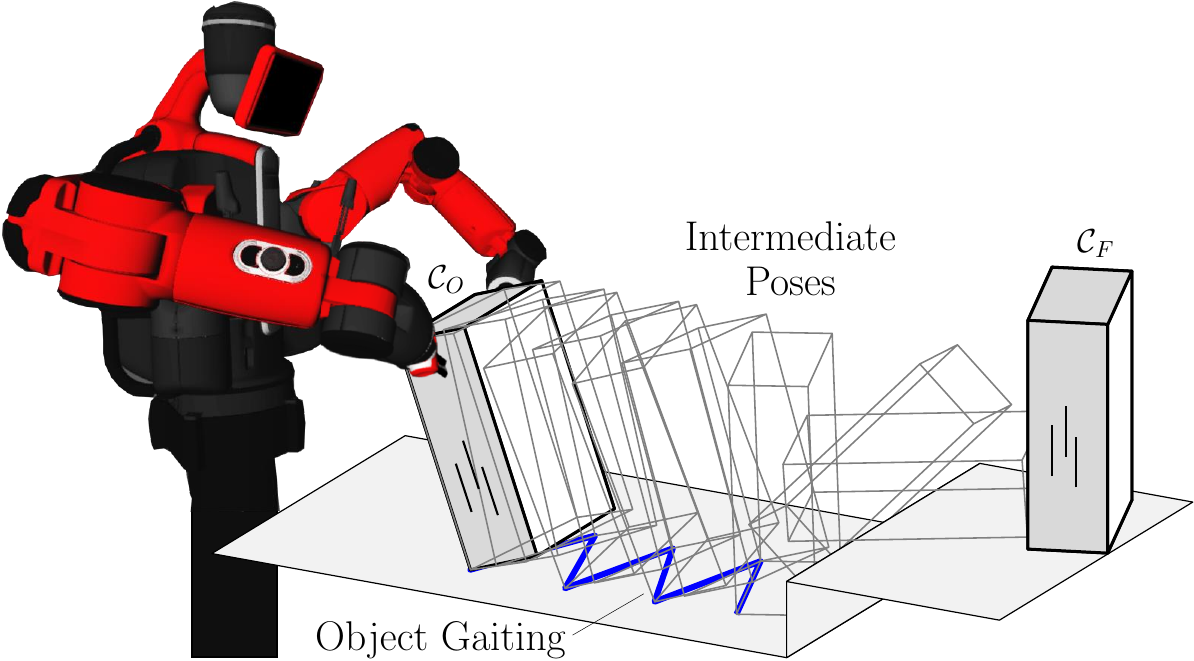}
    \caption{Schematic sketch of dual-handed manipulation of a heavy object between two given poses $\mathcal{C}_O$ and $\mathcal{C}_F$ by a sequence of pivoting motions.}
\label{Fig:Motivation}
\end{figure}

In a pivoting motion, we move the object while creating a sequence of point or line contacts with the environment (see Figure~\ref{Fig:Motivation}). A point contact acts like a spherical joint, whereas a line contact acts like a revolute joint. The location and axes of these joints change during a gaiting motion. These joints are force-closed joints and can only be implemented through adequate frictional force at the object-ground contact that prevents slippage. Because of the making and breaking of the contacts the equations of motion of the object change and technically the system of equations form a hybrid dynamical system. Planning of motion through intermittent contact considering the switching dynamics is difficult and computationally costly, in general, although some attempts have been made in this direction(e.g.,~\cite{PosaCT14}). To reduce the complexity and cost of planning, one approach is to decouple the overall planning problem and first compute a kinematic motion plan and then compute force inputs that follow the kinematic motion plan and satisfy the dynamics constraints. The challenge in the decoupled approach is to ensure that the kinematic plans are dynamically feasible. 

In this paper, we take the decoupled approach to motion planning for pivoting. 
Thus, a plan for pivoting operations consists of (a) {\em Motion plan}: a sequence of joint angles of the manipulators and the corresponding object poses that maintains contact with the ground and ensures that the contacts between the hand and the object are maintained  (b) {\em Force plan}: a sequence of joint torques that are within the actuator limits and ensure that there is enough force at the object ground contact to prevent slippage. Furthermore, to ensure that the kinematic plan is dynamically feasible, we also want to ensure that the manipulator does not lose the grasp of the object and there is no slippage at the hand-object contact. In this paper, we will focus on the motion planning problem. We have studied the problem of computing the force plan (or force synthesis problem), for a given motion plan, in~\cite{Patankar2020}, and we will combine it with our motion plan to generate torques to achieve the motion. 
%For completeness, we have included the formulation of the force planning problem in Section~\ref{sec:force_plan}. 

The key challenge in solving the motion planning problem is that the kinematic constraints of the object maintaining a spherical or a revolute joint with the ground during the motion corresponds to nonlinear manifold constraints in the joint space of the manipulator. For example, in the scenario shown in Fig.~\ref{Fig:Motivation}, the configuration space is $20$ dimensional and since the position kinematics equations are nonlinear, the pivoting constraints will form nonlinear manifolds in this $20$ dimensional space. In sampling-based motion planning in joint space ($\mathbb{J}$-space), these constraints are hard to deal with, although there have been some efforts in this direction~\cite{BerensonSK11,JailletP12,Stilman10,KimU16,YaoK07, bonilla2015sample, KingstonMK2019}. Furthermore, in manipulation by gaiting, where we are performing a sequence of pivoting operations, these manifold constraints are not known beforehand since they depend on the choice of the pivot points (or lines) which has to be computed as a part of the plan. In this paper, we present a novel task-space ($\mathbb{T}$-space) based approach for generating the motion plan that exploits the fact that the kinematic constraints of a revolute or spherical joint constrains the motion of the object to a subgroup of $SE(3)$.

{\bf Contributions}: We present a two-step approach for computing the motion plan. In the first step, we develop an algorithm to compute a sequence of intermediate poses for the object to go from the initial to the goal pose. Two consecutive intermediate poses implicitly determine a point or line on the object and the ground that stay fixed during motion, thus encoding motion about a revolute or a spherical joint. In the second step, we use Screw Linear Interpolation (ScLERP) to determine a task space path between two intermediate poses, along with resolved motion rate control (RMRC)~\cite{whitney1969resolved,Pieper68} to convert the task space path to a joint space path. The advantage of using ScLERP is that it (a) automatically satisfies the kinematic motion constraints that the contact points between the object and the ground co not change during the pivoting motion without explicitly encoding it (b) the object does not penetrate the ground (c) the relative pose between the fingers (or manipulators) at the grasping points do not change, i.e., the closed chain constraints formed by the manipulator and object is maintained throughout the whole motion without explicitly encoding it.
%, since constraints imposed by the contact lies in a subgroup of $SE(3)$ and the two end poses given to ScLERP lies in the same subgroup~\cite{Sarker2020}. 
Thus, the joint space path that we compute along with the object path automatically ensures that the kinematic contact constraints are satisfied and the object remains grasped. This computationally efficient approach for motion planning for manipulation by pivoting, where apparently complicated constraints can be satisfied without explicitly modeling them, is the key contribution of this paper. Assuming that the inertial force are negligible, i.e., the motion is quasi-static, we also show that our motion plan can be combined with the second order cone programming (SOCP) based approach to compute joint torques and grasping forces~\cite{Patankar2020}, while ensuring that all no-slip constraints at the contacts and actuator limits are satisfied. Thus, we can ensure that the kinematic plan is feasible when one considers the forces and actuator limits. We demonstrate our approach in simulation using a dual-armed Baxter robot.   

\section{Related Work}
The use of external environment contacts to enhance the in-hand manipulation capability was first studied by Chavan-Dafle in \cite{Dafle2014}. More recently Hou \textit{et. al} \cite{HouMason2018} have developed a planning algorithm for quasi-static reorientation of 3D objects on a table using a parallel-jaw gripper by having 3D mesh model of the objects. However, they are using a gripper with a special mechanism to allow pivoting at the gripper-object contact and compliant rolling at the object-table contact. In another research, they have referred to the use of environment contact as \textit{shared grasping} wherein they treat the environment as an additional finger \cite{HouMason2020}. They have provided stability analysis of shared grasping by using \textit{Hybrid Force-Velocity Control} (HFVC).
% \orange{The kinematics and dynamics of heavy object manipulation using Wheeled Mobile Robots (WMR) has been studied by Moosavian \textit{et. al} in \cite{moosavian2010heavy,alipour2011point, alipour2015dynamically}.}
%In \cite{alipour2011point}, the authors have proposed tip-over stability margins to guarantee the postural stability of wheeled robots during the concurrent motion of the multiple arms and the mobile platforms.
Murooka \textit{et. al.} \cite{Murooka2015} proposed a method for pushing a heavy object by an arbitrary region of a humanoid robot. Polverini \textit{et. al.} \cite{Polverini2020} also developed a control architecture for a humanoid robot which is able to exploit the complexity of the environment to perform the pushing task of a heavy object.
Pivoting was first was first introduced by Aiyama \textit{et. al.} \cite{aiyama1993pivoting} as a new method of graspless/non-prehensile manipulation. Based on this method, Yoshida \textit{et. al.} \cite{yoshida2007pivoting,yoshida2008whole,yoshida2010pivoting} developed a whole-body motion planner for a humanoid robot to autonomously plan a pivoting
strategy for manipulating bulky objects. They first planned a sequence of collision-free Reeds and Shepp paths (especially straight and circular paths in $\mathbb{R}^2$), then convert these paths into a sequence of pivoting motions. However, this method is limited to the motion on Reeds and Shepp curves to satisfy a nonholonomic constraint, which is not always required. Thus, it is not a general, efficient, and optimum way to manipulate objects between two given poses, especially when there are no obstacles in the workspace.
%Manipulating bulky objects by a sequence of pivoting motions was first introduced by Yoshida \textit{et. al.} in \cite{yoshida2007pivoting,yoshida2008whole,yoshida2010pivoting} for a humanoid robot.
% However, their method is limited to manipulation along predefined Reeds and Shepp curves (especially straight and circular paths in $\mathbb{R}^2$).
Hence, we propose a general gait planning method as an optimization problem by defining the \textit{intermediate poses} and using the ScLERP to manipulate the object by gaiting between any two arbitrary poses.
% Also, while planning the collision-free pivoting sequence, both the humanoid robot and the box it is manipulating, are modelled together as a rigid body whose geometry is their combined bounding volume. The reason could be the difficulty in simulating the contact between the humanoid and the end-effector. 
% \red{To Do: Mentioning the assumptions of some related works and distinguish our work.}
Although there are task-space based force control algorithms~\cite{bouyarmane2018quadratic,khatib1995inertial}, they do not consider planning through the hybrid dynamics created by the intermittent contact. Furthermore~\cite{bouyarmane2018quadratic} approximates the second order friction cone as a polyhedral cone, which can lead to infeasible solutions even when feasible plans exist. 
%Hence, we decouple the kinematics from the dynamics in our approach.

%%%%%%%%%%%%%%%%%%%%%%%%%%%%%%%%%%%%%%%%%%%%%%%%%
\section{Preliminaries}

\noindent
\textbf{Quaternions and Rotations}: The quaternions are the set of hypercomplex numbers, $\mathbb{H}$. A quaternion $Q \in \mathbb{H}$ can be represented as a 4-tuple $Q = (q_0, \boldsymbol{q}_r) = (q_0, q_1, q_2, q_3)$, $q_0 \in \mathbb{R}$ is the real scalar part, 
$\boldsymbol{q}_r=(q_1, q_2, q_3) \in \mathbb{R}^3$ corresponds to the imaginary part.
The conjugate, norm, and inverse of a quaternion $Q$ is given by
%$Q^* = q_0 - \boldsymbol{q}$
$Q^* = (q_0, -\boldsymbol{q}_r)$, $\lVert Q \rVert = \sqrt{Q Q^*} = \sqrt{Q^* Q}$,
%  = \sqrt{q_0^2 + q_1^2 + q_2^2 + q_3^2}
and $Q^{-1} = Q^*/{\lVert Q \rVert}^2$, respectively. Addition and multiplication of two quaternions
% $P = p_0 + \boldsymbol{p} =(p_0, \boldsymbol{p}_r)$
% $Q = q_0 + \boldsymbol{q} = (q_0, \boldsymbol{q}_r)$
$P = (p_0, \boldsymbol{p}_r)$ and
$Q = (q_0, \boldsymbol{q}_r)$ are performed as $P+Q = (p_0 + q_0, \boldsymbol{p}_r + \boldsymbol{q}_r)$ and $PQ = (p_0 q_0 - \boldsymbol{p}_r \cdot \boldsymbol{q}_r, p_0 \boldsymbol{q}_r + q_0 \boldsymbol{p}_r + \boldsymbol{p}_r \times \boldsymbol{q}_r)$.
% \orange{As a result, $PQ \ne QP$, $(PQ)^* = Q^* P^*$, $\lVert PQ \rVert = \lVert P \rVert \lVert Q \rVert$, and $QQ^{-1}=Q^{-1}Q=1$.}
The quaternion $Q$ is a \textit{unit quaternion}
% or \textit{versor}
if ${\lVert Q \rVert} = 1$, and consequently, $Q^{-1} = Q^*$. Unit quaternions are used to represent the set of all rigid body rotations,  $SO(3)$, the Special Orthogonal group of dimension $3$. Mathematically,  $SO(3)=\left\{\boldsymbol{R} \in \mathbb{R}^{3 \times 3}\left|\boldsymbol{R}^{\mathrm{T}} \boldsymbol{R}=\boldsymbol{R} \boldsymbol{R}^{\boldsymbol{T}}=\boldsymbol{I}_3,\right| \boldsymbol{R} \mid=1\right\}$, where $\boldsymbol{I}_3$ is a $3\times3$ identity matrix and $\left| \cdot \right|$ is the determinant operator. The unit quaternion corresponding to a rotation is $Q_R = (\cos\frac{\theta}{2}, \boldsymbol{l} \sin\frac{\theta}{2})$, where $\theta \in [0,\pi]$ is the angle of rotation about a unit axis $\boldsymbol{l} \in \mathbb{R}^3$. 

\noindent
\textbf{Dual Quaternions and Rigid Displacements}:
In general, dual numbers are defined as $d = a + \epsilon b$ where $a$ and $b$ are elements of an algebraic field, and $\epsilon$ is a \textit{dual unit} with $\epsilon ^ 2 = 0, \epsilon \ne 0$.
%\orange{Moreover, a function $f(a + \epsilon b)$ can be expanded using the Taylor expansion as $f(a)+\epsilon bf'(a)$.}
Similarly, a dual quaternion $D$ is defined as $D= P + \epsilon Q$
% or an 8-tuple $D=(P, Q)$
where $P, Q \in \mathbb{H}$. The conjugate, norm, and inverse of the dual quaternion $D$ is represented as $D^* = P^* + \epsilon Q^*$, $\lVert D \rVert = \sqrt{D D^*} = \sqrt{P P^* + \epsilon (PQ^* + QP^*)}$, and $D^{-1} = D^*/{\lVert D \rVert}^2$,
% = P^{-1} (1-\epsilon QP^{-1})
respectively. Another definition for the conjugate of $D$ is represented as $D^\dag = P^* - \epsilon Q^*$. Addition and multiplication of two dual quaternions $D_1= P_1 + \epsilon Q_1$ and $D_2= P_2 + \epsilon Q_2$ are performed as $D_1 + D_2 = (P_1 + P_2) + \epsilon (Q_1 + Q_2)$ and $D_1 D_2 = (P_1 P_2) + \epsilon (P_1 Q_2 + Q_1 P_2) $.
% \orange{As a result, $D_1 D_2 \ne D_2 D_1$, $(D_1 D_2)^* = D_2^* D_1^*$, $(D_1 D_2)^\dag = D_2^\dag D_1^\dag$, and $DD^{-1}=D^{-1}D=1$.}
The dual quaternion $D$ is a \textit{unit dual quaternion} if ${\lVert D \rVert} = 1$, i.e., ${\lVert P \rVert} = 1$ and $PQ^* + QP^* = 0$, and consequently, $D^{-1} = D^*$. Unit dual quaternions can be used to represent the group of rigid body displacements, $SE(3) = \mathbb{R}^3 \times SO(3)$, $S E(3)=\left\{(\boldsymbol{R}, \boldsymbol{p}) \mid \boldsymbol{R} \in S O(3), \boldsymbol{p} \in \mathbb{R}^{3}\right\}$. An element $\boldsymbol{T} \in SE(3)$, which is a pose of the rigid body, can also be expressed by a $4 \times 4$ homogeneous transformation matrix as
$\boldsymbol{T} = \left[\begin{smallmatrix}\boldsymbol{R}&\boldsymbol{p}\\\boldsymbol{0}&1\end{smallmatrix}\right]$ where $\boldsymbol{0}$ is a $1 \times 3$ zero vector. A rigid body displacement (or transformation) is represented by a unit dual quaternion $D_T = Q_R + \frac{\epsilon}{2} Q_p Q_R$ where $Q_R$ is the unit quaternion corresponding to rotation and  $Q_p = (0, \boldsymbol{p}) \in \mathbb{H}$ corresponds to the translation.

%The homogeneous transformation $\boldsymbol{T} \in SE(3)$ (i.e., the rotation $\boldsymbol{R}$ followed by the translation $\boldsymbol{p}$) can be also represented by 

\noindent
\textbf{Screw Displacement}: Chasles-Mozzi theorem states that the general Euclidean displacement/motion of a rigid body from the origin $\boldsymbol{I}$ to $\boldsymbol{T} = (\boldsymbol{R},\boldsymbol{p}) \in SE(3)$
can be expressed as a rotation $\theta$ about a fixed axis $\mathcal{S}$, called the \textit{screw axis}, and a translation $d$ along that axis (see Fig.~\ref{Fig:ScrewDisplacement}). Plücker coordinates can be used to represent the screw axis by $\boldsymbol{l}$ and $\boldsymbol{m}$, where $\boldsymbol{l} \in \mathbb{R}^3$ is a unit vector that represents the direction of the screw axis $\mathcal{S}$, $\boldsymbol{m} = \boldsymbol{r} \times \boldsymbol{l}$, and $\boldsymbol{r} \in \mathbb{R}^3$ is an arbitrary point on the axis. Thus, the screw parameters are defined as $\boldsymbol{l}, \boldsymbol{m}, \theta, d$.
% and the pitch of the screw displacement is $\theta/d$.
The screw displacements can be expressed by the dual quaternions as $D_T = Q_R + \frac{\epsilon}{2} Q_p Q_R = (\cos \frac{\Phi}{2}, L \sin \frac{\Phi}{2})$ where $\Phi = \theta + \epsilon d$ is a dual number and $L = \boldsymbol{l} + \epsilon \boldsymbol{m}$ is a
% unit
dual vector.
% with zero scalar part.
A power of the dual quaternion $D_T$ is then defined as $D_T^{\tau} = (\cos \frac{\tau \Phi}{2}, L \sin \frac{\tau \Phi}{2})$, $\tau >0$.
% Note that axis $L$ is independent of $\tau$.

% $D_T = Q_R + \epsilon Q_p Q_R/2 = \cos \frac{\Theta}{2} + L \sin \frac{\Theta}{2}$
% $D_T^{\tau} = \cos \frac{\tau \Theta}{2} + L \sin \frac{\tau \Theta}{2}$ 

\begin{figure}[!htbp]
    \centering
    \includegraphics[scale=0.57]{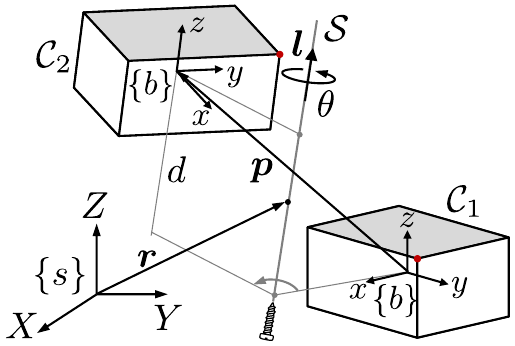}
    \caption{Screw displacement from pose $\mathcal{C}_1$ to pose $\mathcal{C}_2$.}
\label{Fig:ScrewDisplacement}
\end{figure}

\noindent
\textbf{Screw Linear Interpolation (ScLERP)}: To perform a one degree-of-freedom smooth screw motion (with a constant rotation and translation rate) between two object poses in $SE(3)$, the screw linear interpolation (ScLERP) can be used. The ScLERP provides a \textit{straight line} in $SE(3)$ which is the closest path between two given poses in $SE(3)$. 
%However, note that the origin of the object does not necessarily follow a straight line in
% the real Euclidean space
%$\mathbb{R}^3$.
% If the poses are represented by $4 \times 4$ homogeneous transformation matrices $\boldsymbol{T}_1$ and $\boldsymbol{T}_2$, the path provided by the ScLERP is derived by $\boldsymbol{T}(\tau)=\boldsymbol{T}_{1} \exp \left(\log \left(\boldsymbol{T}_{1}^{-1} \boldsymbol{T}_{2}\right) \tau \right)$ where $ \tau \in[0,1]$ is a scalar path parameter, $\exp(\cdot)$ is matrix exponential, and $\log(\cdot)$ is matrix logarithm. Similarly,
If the poses are represented by unit dual quaternions $D_{1}$ and $D_{2}$, the path provided by the ScLERP is derived by $D(\tau) = D_1 (D_1^{-1}D_2)^{\tau}$ where $ \tau \in[0,1]$ is a scalar path parameter. 
%and $D_{12} = D_1^{-1}D_2 = D_1^{*}D_2$ represents the pose $D_2$ relative to the pose $D_1$ (or the transformation from $D_1$ to $D_2$). 
As $\tau$ increases from 0 to 1, the object moves between two poses along the path
% $\boldsymbol{T}(\tau)$ or
$D(\tau)$ by the rotation $\tau \theta$ and translation $\tau d$. Let $D_{12} = D_1^{-1}D_2$. To compute $D_{12}^\tau$, the screw coordinates $\boldsymbol{l}, \boldsymbol{m}, \theta, d$ are first extracted from $D_{12} = P + \epsilon Q = (p_0,\boldsymbol{p}_r) + \epsilon (q_0,\boldsymbol{q}_r) = (\cos\frac{\theta}{2}, \boldsymbol{l} \sin\frac{\theta}{2}) + \epsilon Q$ by $\boldsymbol{l} = \boldsymbol{p}_r/ \lVert \boldsymbol{p}_r \lVert $, $\theta = 2 \, \mathrm{atan2}(\lVert \boldsymbol{p}_r \lVert, p_0)$, $d = \boldsymbol{p} \cdot \boldsymbol{l}$, and $\boldsymbol{m} = \frac{1}{2} (\boldsymbol{p} \times \boldsymbol{l} + (\boldsymbol{p}-d \boldsymbol{l})\cot \frac{\theta}{2})$ where $\boldsymbol{p}$ is derived from $2QP^* = (0, \boldsymbol{p})$ and $\mathrm{atan2}(\cdot)$ is the two-argument arctangent. Then, $D_{12}^\tau = (\cos \frac{\tau \Phi}{2}, L \sin \frac{\tau \Phi}{2})$ is directly derived from $\left(\cos \frac{\tau \theta}{2}, \sin \frac{\tau \theta}{2}\boldsymbol{l}\right)+\epsilon \left( -\frac{\tau d}{2}\sin \frac{\tau \theta}{2}, \frac{\tau d}{2}\cos \frac{\tau \theta}{2}\boldsymbol{l}+\sin \frac{\tau \theta}{2}\boldsymbol{m} \right) $. Note that $\theta =0, \pi$ corresponds to pure translation between two poses and the screw axis is at infinity.

%%%%%%%%%%%%%%%%%%%%%%%%%%%%%%%%%%%%%%%%%%%%%%%%%
\section{Problem Statement}
\label{sec:ProblemStatement}
Let us assume that we want to manipulate a heavy cuboid object quasi-statically by using $n$ manipulators, while maintaining contact with environment, from an initial pose $\mathcal{C}_O \in SE(3)$ to a final pose $\mathcal{C}_F \in SE(3)$. We assume that both $\mathcal{C}_O$ and $\mathcal{C}_F$ are in the robot's workspace.
%\orange{The manipulation task can be done by using fixed-base/wheeled manipulators or arms of a humanoid robot and we can assume that the object always remains in the manipulators' workspace.} 
% We also assume that the object always remains in the manipulators' workspace. Figure~\ref{Fig:Cube_Manipulator} shows a cuboid object in contact with the environment at the vertex $v$ and also with the $i$-th manipulator's end-effector at the contact point $c_i$ (where $i=1,..,n$). Contact coordinate frames \{$c_i$\} and \{$v$\}  are attached to the object at each manipulator and environment contact, respectively, such that $\bm{n}$-axis of the frames is normal (inward) to the object surface and two other axes, $\bm{t}$ and $\bm{o}$, are tangent to the surface. The coordinate frame \{$b$\} is attached to the object center of mass, coordinate frame \{$e_i$\} is attached to the $i$-th end-effector, and \{$s$\} is the inertial coordinate frame.
Let $\boldsymbol{\Theta}^i = [\theta_1^i, \theta_2^i, \cdots, \theta_{l_i}^i] \in \mathbb{R}^{l_i}$ be the vector of joint angles of the $i$-th $l_i$-DoF manipulator, which represents the \textit{joint space} ($\mathbb{J}$-space) or the \textit{configuration space} ($\mathbb{C}$-space) of the manipulator.
% and $\boldsymbol{\Theta} = \left[\boldsymbol{\Theta}^1, \boldsymbol{\Theta}^2, \cdots, \boldsymbol{\Theta}^n \right] \in \mathbb{R}^N$, where $ N=\sum_{i=1}^n{l_i} $, be the \textit{joint space} ($\mathbb{J}$-space) or the \textit{configuration space} ($\mathbb{C}$-space) of all the manipulators.
Moreover, $\mathcal{E}^i \in SE(3)$ is defined as the pose of the end-effector of the $i$-th manipulator where $\mathcal{E}^i = \mathcal{FK}(\boldsymbol{\Theta}^i)$ and $\mathcal{FK}(\cdot)$ is the manipulator forward kinematics map. Therefore, $\boldsymbol{\Theta}_O^i \in \mathbb{R}^{l_i}$ and $\mathcal{E}_O^i \in SE(3)$ represent the initial configuration of the $i$-th manipulator (in $\mathbb{J}$-space) and pose of $i$-th end-effector, respectively, corresponding to the object initial pose $\mathcal{C}_O$ and $\boldsymbol{\Theta}_F^i \in \mathbb{R}^{l_i}$ and $\mathcal{E}_F^i \in SE(3)$ represent the final configuration of the $i$-th manipulator (in $\mathbb{J}$-space) and pose of $i$-th end-effector, respectively, corresponding to the object final pose $\mathcal{C}_F$. We assume that the position of the manipulator-object contact $c_i$ is given and the transformation between the frames $\{e_i\}$ and $\{c_i\}$ remains constant during the manipulation, i.e., there is no relative motion (slippage) at the contact interface. 

Our motion planning problem is now defined as computing a sequence of joint angles $\boldsymbol{\Theta}^i(j)$, where $j=1,\cdots,m$, $\boldsymbol{\Theta}^i(1) = \boldsymbol{\Theta}^i_O$, $\boldsymbol{\Theta}^i(m) = \boldsymbol{\Theta}^i_F$, to manipulate the object while maintaining contact with the environment from its initial pose $\mathcal{C}_O$ to a final pose $\mathcal{C}_F$ when $(\mathcal{C}_O, \mathcal{E}_O^i, \boldsymbol{\Theta}_O^i)$ and $(\mathcal{C}_F, \mathcal{E}_F^i)$ ($i=1,...,n$) are given. Moreover, our force planning problem is computing the minimum contact wrenches required to be applied at $c_i$ during the object manipulation to balance the external wrenches (e.g., gravity) and also the environment contact wrenches using the method we have presented in \cite{Patankar2020}. 

% and minimum manipulators' joint torques

% It is assumed that in all the poses during the motion, the manipulators apply enough contact wrenches at $c_i$ to balance the external wrenches (e.g., gravity) and the environment contact wrenches.

\begin{figure}[!h]
    \centering
    \includegraphics[scale=0.45]{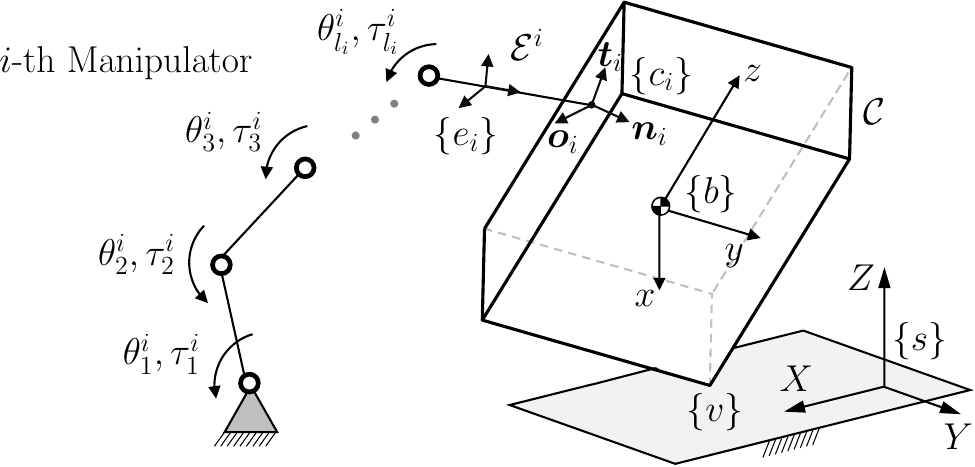}
    \caption{An cuboid-shape object being tilted at one of its vertices.}
\label{Fig:Cube_Manipulator}
\end{figure}

\textbf{Solution Approach Overview}: Generally speaking, to move an object while maintaining contact we can use two primitive motions, namely, (1) \textit{sliding} on a vertex, edge, or face of the object in contact with the environment (Fig.~\ref{Fig:SRP}-\subref{Fig:SRP_S}) and
%\subref{Fig:SRP_S2}),
%(2) \textit{tumbling} about an edge of the object (Fig.~\ref{Fig:SRP}-\subref{Fig:SRP_T}) or environment (Fig.~\ref{Fig:Motivation}), and
(2) \textit{pivoting} about an axis passing through a vertex, edge, or face of the object in contact with the environment (Fig.~\ref{Fig:SRP}-\subref{Fig:SRP_T},\subref{Fig:SRP_P}, Fig.~\ref{Fig:Motivation}). All other motions can be made by combining these primitive motions. Note that we consider \textit{tumbling} as a special case of pivoting when the axis of rotation passes through an object edge or face. Manipulation by sliding (or pushing) can be useful in many scenarios like picking a penny off a table. However, in heavy and bulky object manipulation scenarios, sliding may not give feasible solutions. Thus, in this paper, we will focus on manipulation using the pivoting primitive.

%However, in heavy and bulky object manipulation scenarios, when the friction between the object and the ground is high or there are small obstacles (like a step) that have to negotiated, sliding may not give feasible solutions. Thus, in this paper, we will focus on manipulation using the pivoting primitive.

Our \textit{manipulation strategy} can be described briefly as follows.
% \red{(i) We first propose the main primitive motions for manipulating a heavy cuboid object using environmental contact.}
(i) Given the initial and final pose of the object, we first determine if multiple pivoting moves have to be made and, if necessary, compute intermediate poses of the object. (ii) Using the dual quaternion representation of these poses, we compute  paths in $SE(3)$ using the ScLERP for the object and end-effectors. These paths automatically satisfies all the basic task related constraints (without any additional explicit representation of the constraints).
%which states that the object should maintain continuous contact with the environment during the manipulation. Using the compute screw parameters of the object, the end-effector path in $\mathbb{T}$-space, which automatically satisfies all the motion constraints (without any additional explicit representation of the constraints), is then computed.
(iii) We use the (weighted) pseudoinverse of the Jacobian to derive the joint angles in the $\mathbb{J}$-space from the computed $\mathbb{T}$-space path. (iv) Finally, we compute the minimum required contact wrenches and manipulators' joint torques required for object manipulation. Note that the steps (ii) to (iv) can be done either sequentially or they can be interleaved in a single discrete time-step % \red{To Do: We need to add an algorithm}.

% \red{To Do: A list of assumptions for for both force and motion planning}.
% \end{itemize}
% \begin{itemize}
%     \item We first propose the main primitive motions for manipulating a heavy cuboid object using environmental contact. 

%     \item Given the initial and final poses of the object, we compute intermediate poses of the object based on the choice of primitive motions.
    
%     \item \red{Using the dual quaternion representation of the poses (initial, intermediate, and final), we compute a path in $SE(3)$ using the ScLERP for the object to follow. This path satisfies all the basic task related constraint which states that the object should maintain continuous contact with the environment during the manipulation. Using the compute screw parameters of the object, the end-effector path in $\mathbb{T}$-space, which automatically satisfies all the motion constraints (without any additional explicit representation of the constraints), is then computed.}
    
%     \item \red{We use the (weighted) pseudoinverse of the Jacobian to derive the joint angles in the $\mathbb{J}$-space from the computed $\mathbb{T}$-space path.}
    
%     \item \red{Finally, we compute the contact wrenches and manipulators' joint torques required for object manipulation.}
% \end{itemize}

%%%%%%%%%%%%%%%%%%%%%%%%%%%%%%%%%%%%%%%%%%%%%%%%%
\section{Force Planning}
\label{sec:force_plan}

In this section, we briefly review our force planning algorithm proposed in \cite{Patankar2020}. Consider an object which is in contact with the environment at $m$ contact positions and being manipulated quasi-statically by $n$ $l_i$-DOF manipulators ($i = 1,...,n$) an $n$ contact positions (Fig.~\ref{Fig:Cube_Manipulator}). We define the grasping force optimization  problem (GFOP) as a convex optimization problem as
% We define the grasping force synthesis problem as a convex optimization problem as
\begin{subequations}
\begin{align}
& {\underset {\bm{F}_{C}, \bm{F}_{E}, \bm{\tau}}{\text{minimize} }} & & \max(\bm{F}_{n}) \label{eq:ObjectiveFunction}\\
& \text{subject to} & & {{\bm{\mathrm{G}}}_{C} \bm{F}_{C}} + {{\bm{\mathrm{G}}}_{E} \bm{F}_{E}} + \bm{f}_{\mathrm{ext}} = \bm{0}, \label{eq:NetWrench}\\
&&& \bm{F}_{C} \in {\mathcal{K}_C}, \label{eq:FC_manipulator}\\
&&& \bm{F}_{E} \in {\mathcal{K}_E}, \label{eq:FC_environment}\\
%&&& \bm{F}_{n} \leq \bm{F}_{n,\mathrm{max}}, \label{eq:Fn_max}\\
&&& \bm{\tau} + {\bm{\mathrm{J}}}^\mathrm{T} \bm{F}_{C} - \bm{\tau}_g = \bm{0}, \label{eq:manipulatorTorque}\\
&&& \bm{\tau}_{\rm min} \leq \bm{\tau} \leq \bm{\tau}_{\rm max} \label{eq:manipulatorTorqueLimits}.
\end{align}
\label{equation:optimization}
\end{subequations}
The objective function \eqref{eq:ObjectiveFunction} is the maximum normal force at the object-manipulator contacts, where ${\bm F}_{n} \in \mathbb{R}^n$ is the vector of normal contact forces. Equation~\eqref{eq:NetWrench} is the equilibrium constraint where ${{\bm{\mathrm{G}}}_{C} \bm{F}_{C}} \in \mathbb{R}^6$ and ${{\bm{\mathrm{G}}}_{E} \bm{F}_{E}} \in \mathbb{R}^6$ are the total wrenches exerted by the manipulators and environment through the contacts, with respect to the body frame $\{b\}$, respectively, $\bm{F}_C \in \mathbb{R}^{6n}$ and $\bm{F}_E \in \mathbb{R}^{6m}$ are the vectors containing all the contact wrenches at the manipulators and environment contacts, respectively, ${{\bm{\mathrm{G}}}_{C}}$ is the grasp matrix and ${{\bm{\mathrm{G}}}_{E}}$ is the matrix that converts the environment contact wrench to the body frame, and $\bm{f}_{\mathrm{ext}} \in \mathbb{R}^6$ is the total external wrench (including the object weight) applied to the object at the body frame \{$b$\}. \eqref{eq:FC_manipulator} and \eqref{eq:FC_environment} represent the friction cone constraints at all the manipulators and environment contacts, respectively, where ${\mathcal{K}_C}$ and ${\mathcal{K}_E}$ are the second-order cones (SOC).
% Equation \eqref{eq:Fn_max} is a constraint on the maximum normal force exerted by the manipulators on the object, where ${\bm F}_{n} \in \mathbb{R}^n$ is the vector of normal contact forces and ${\bm F}_{n,\mathrm{max}} \in \mathbb{R}^n$ is a positive vector that represent the upper limit for these forces.
\eqref{eq:manipulatorTorqueLimits} represents the manipulator joint torque constraints, where $\bm{\tau} \in \mathbb{R}^{l}$ is a vector containing the joint torques of all the manipulators and computing from \eqref{eq:manipulatorTorque}, $\bm{\tau}_g \in \mathbb{R}^{l}$ is the vector of joint torques due to gravity, and ${\bm{\mathrm{J}}} = \mathrm{diag}({\bm{\mathrm{J}}}_1, \dots, {\bm{\mathrm{J}}}_n) \in \mathbb{R}^{6n \times l}$ is the overall Jacobian matrix of the manipulators ($l=\sum_{i=1}^n{l_i}$).

% The \eqref{equation:optimization} can express either (1) a Grasping Force Optimization Problem (GFOP) where we minimizes the grasping  forces at the manipulator contacts by setting \eqref{eq:ObjectiveFunction} as $\Phi = \max(\bm{F}_{n})$ or $\Phi = \left\|{\bm{F}_{n}}\right\|_{2}$, or (2) a Torque Optimization Problem (TOP) where we minimize the manipulators joint torques by setting \eqref{eq:ObjectiveFunction} as $\Phi = \max(\bm{\tau})$ or $\Phi = \left\|{\bm{\tau}}\right\|_{2}$.

%%%%%%%%%%%%%%%%%%%%%%%%%%%%%%%%%%%%%%%%%%%%%%%%%
\section{Pivoting}
Pivoting is a motion where an object is moved while maintaining a point or line contact with a support surface. When an object maintains a point contact, the constraints on motions are same as those imposed by a spherical joint. Thus, the motion of the object is restricted to $SO(3)$, which is a subgroup of $SE(3)$, and the axis of rotation passes through the contact point. During pivoting with line contact (or tumbling), the constraint on the motion is same as that imposed by a revolute joint with the axis of the joint being the line of contact. Thus, in this case, the motion of the object is restricted to $SO(2)$, which is also a subgroup of $SE(3)$. This mathematical structure of pivoting motions is key to our ScLERP approach as we discuss below.

Suppose an object can reach a goal pose from a start pose using a single pivoting motion. This can happen when the start and the goal poses are such that there is a common vertex, say $v$, between the start and goal poses that lie on the support surface (Fig.~\ref{Fig:SLERP_Linear}, Fig.~\ref{Fig:SRP}-\subref{Fig:SRP_P}). To plan the motion of the object between the start and goal pose via interpolation such that the contact at the vertex $v$ is maintained and the object does not penetrate the surface one should be careful about the interpolation scheme. 

Two popular choices for interpolation between two given poses $(Q_1,\bm{p}_1)$ and $(Q_2,\bm{p}_2)$ (using unit quaternion parameterization of orientation) are (a) linear interpolation for both orientation and position, i.e., $Q(\tau) = [{Q_1 + (Q_2 - Q_1)\tau}]/{\|Q_1 + (Q_2 - Q_1)\tau\|}$, $ \bm{p}(\tau) = \bm{p}_1 + (\bm{p}_2-\bm{p}_1)\tau $ (Fig.~\ref{Fig:SLERP_Linear}-\subref{Fig:Linear}), and (b) spherical linear interpolation (SLERP) for orientation and linear interpolation for position, i.e., $ Q(\tau) = Q_1 (Q_1^{-1}Q_2)^{\tau} $, $ \bm{p}(\tau) = \bm{p}_1 + (\bm{p}_2-\bm{p}_1)\tau $, $ \tau \in[0,1]$ (Fig.~\ref{Fig:SLERP_Linear}-\subref{Fig:SLERP}).
% \orange{(either linear interpolation (LERP) for both orientation and position (Fig.~\ref{Fig:SLERP_Linear}-\subref{Fig:Linear}), or spherical linear interpolation (SLERP) for orientation and linear interpolation for position (Fig.~\ref{Fig:SLERP_Linear}-\subref{Fig:SLERP}), using unit quaternion parameterization of orientation),}
% (a popular choice being linear interpolation for position and spherical linear interpolation for orientation using unit quaternion parameterization of orientation),
If we use these linear interpolations between the end poses in the space of parameters, the object will penetrate the support surface, so the motion plan will not be feasible. The motion obtained will also change with the choice of the coordinate frames for the initial and final pose. The advantage of using ScLERP is that it is coordinate invariant. Furthermore, since the pivoting motions also belongs to a subgroup of $SE(3)$, ScLERP ensures that all the intermediate poses will lie in the same subgroup that contains the initial and goal pose (i.e., all intermediate poses will have the vertex $v$ fixed to the support surface). Thus, for motion planning, it is not necessary to explicitly enforce the pivoting constraint. Lemma~\ref{lemma:FixedPoint} formalizes this discussion. Furthermore, ScLERP results in a motion along the shortest path, which in the pivoting scenario leads to a rotation about the axis through the pivot point(s) by the smallest angle. Thus, the non-penetration constraint of the object with the ground is also satisfied without explicitly enforcing it.

\begin{figure}[!htbp]
    \centering
    \subfloat[]{\includegraphics[scale=0.36]{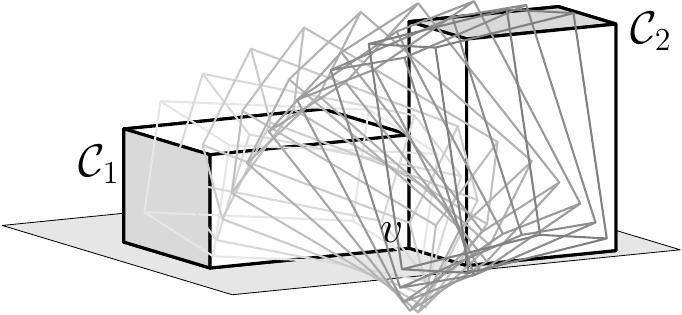}\label{Fig:Linear}}
    \subfloat[]{\includegraphics[scale=0.36]{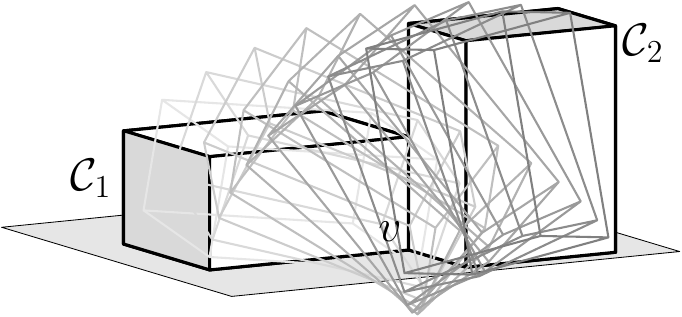}\label{Fig:SLERP}}
    \caption{Interpolation between two given poses $\mathcal{C}_1$ and $\mathcal{C}_2$, (a) linear interpolation for both orientation and position, (b) spherical linear interpolation (SLERP) for orientation and linear interpolation for position.}
\label{Fig:SLERP_Linear}
\end{figure}

\begin{figure}[!htbp]
    \centering
    \subfloat[]{\includegraphics[scale=0.36]{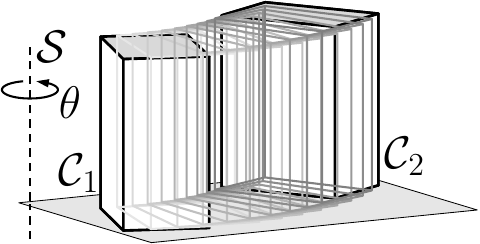}\label{Fig:SRP_S}}
    %\subfloat[]{\includegraphics[scale=0.5]{}\label{Fig:SRP_S2}}\\
    \subfloat[]{\includegraphics[scale=0.36]{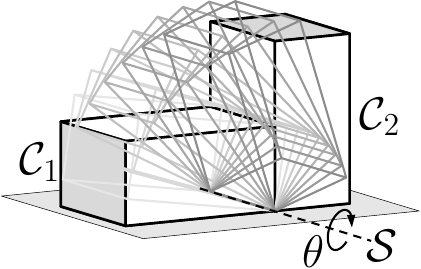}\label{Fig:SRP_T}}
    \subfloat[]{\includegraphics[scale=0.36]{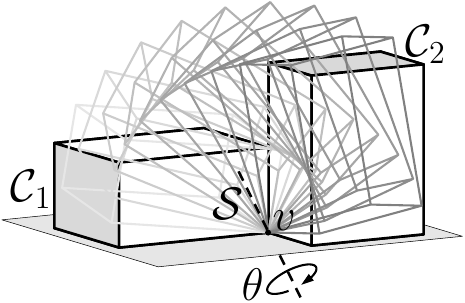}\label{Fig:SRP_P}}
    \caption{Examples of the primitive motions for manipulating cuboid-shape objects by exploiting the environment contact, (a) sliding or pushing on a face, (b) pivoting about an edge (tumbling), (c) pivoting about a vertex.}
\label{Fig:SRP}
\end{figure}

\begin{lemma}
Let $D_1 = Q_{R1} + \frac{\epsilon}{2} Q_{p1} Q_{R1}$ and $D_2 = Q_{R2} + \frac{\epsilon}{2} Q_{p2} Q_{R2}$ be two unit dual quaternions representing two poses of a rigid body. If a point $\boldsymbol{v} \in \mathbb{R}^3$ in the rigid body has the same position in both poses, the position of this point remains the same in all the poses provided by the ScLERP $D(\tau) = D_1 (D_1^{-1}D_2)^{\tau}$ where $ \tau \in[0,1]$.
\label{lemma:FixedPoint}
\end{lemma}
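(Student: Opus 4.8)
The plan is to work directly with the action of unit dual quaternions on points of $\mathbb{R}^3$ and show that the fixed point $\boldsymbol{v}$ of the endpoints is fixed along the whole ScLERP path. Recall that a point $\boldsymbol{v}$ with point-quaternion $Q_v = (1, \boldsymbol{v})$ (embedded as a unit dual quaternion $\hat{v} = 1 + \epsilon Q_v$, using the $\epsilon$-part convention) is transformed under a rigid displacement $D$ to $D \, \hat{v} \, D^{\dag}$, and the hypothesis ``$\boldsymbol{v}$ has the same position in poses $D_1$ and $D_2$'' translates to $D_1 \hat{v} D_1^{\dag} = D_2 \hat{v} D_2^{\dag}$. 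I would first rewrite this as $(D_1^{-1} D_2)\, \hat{v}\, (D_1^{-1} D_2)^{\dag} = \hat{v}$, i.e.\ $\boldsymbol{v}$ is a fixed point of the relative displacement $D_{12} = D_1^{-1} D_2$. So it suffices to prove the statement for $D_{12}^{\tau}$ relative to the identity, and then conjugate back by $D_1$: if $D_{12}^{\tau} \hat{v} (D_{12}^{\tau})^{\dag} = \hat{v}$ for all $\tau$, then $D(\tau)\hat{v} D(\tau)^{\dag} = D_1 D_{12}^{\tau} \hat{v} (D_{12}^{\tau})^{\dag} D_1^{\dag} = D_1 \hat{v} D_1^{\dag}$, which is the (common) position of $\boldsymbol{v}$ in pose $D_1$.

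Next I would interpret ``$\boldsymbol{v}$ fixed under $D_{12}$'' geometrically via the Chasles--Mozzi screw decomposition already set up in the Preliminaries: $D_{12} = (\cos\frac{\Phi}{2}, L\sin\frac{\Phi}{2})$ with $\Phi = \theta + \epsilon d$ and $L = \boldsymbol{l} + \epsilon \boldsymbol{m}$. A point is fixed by a screw displacement iff the pitch vanishes ($d = 0$, so it is a pure rotation) and the point lies on the screw axis $\mathcal{S}$. Since $D_{12}^{\tau} = (\cos\frac{\tau\Phi}{2}, L\sin\frac{\tau\Phi}{2})$ has the \emph{same} screw axis $L$ and the same (now also zero) pitch, with only the rotation angle scaled to $\tau\theta$, every point on $\mathcal{S}$ — in particular $\boldsymbol{v}$ — is fixed by $D_{12}^{\tau}$ for every $\tau \in [0,1]$. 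This is the conceptual heart of the argument; the rest is bookkeeping.

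To make this rigorous without hand-waving about the pitch, I would carry out the short algebraic computation: from $D_{12}\hat{v}D_{12}^{\dag} = \hat{v}$, separate real and $\epsilon$ parts. The real part forces $Q_{R,12}$ to commute appropriately with $(1,\boldsymbol{v})$, i.e.\ $\boldsymbol{v}$ lies on the rotation axis $\boldsymbol{l}$ of $Q_{R,12}$ (assuming $\theta \neq 0$; the cases $\theta = 0, \pi$, meaning pure translation, are handled separately — there a fixed point exists only when the translation is zero, so $D_{12} = \mathrm{Id}$ and the claim is trivial). The $\epsilon$ part then pins down the translational component and yields $d = 0$ together with the constraint that $\boldsymbol{v}$ is the specific point $\boldsymbol{r}$ on the axis satisfying $\boldsymbol{m} = \boldsymbol{r} \times \boldsymbol{l}$. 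Feeding these two facts ($\theta$ arbitrary but $d = 0$, and $\boldsymbol{v}$ on the screw axis) into the explicit formula for $D_{12}^{\tau}$ given at the end of the Preliminaries, a direct substitution shows $D_{12}^{\tau}\hat{v}(D_{12}^{\tau})^{\dag} = \hat{v}$; combined with the conjugation-by-$D_1$ step above, this completes the proof.

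The main obstacle I anticipate is purely notational: being careful with the two conjugates $D^*$ and $D^{\dag}$ (only $D^{\dag}$ gives the correct point-transformation rule) and with the embedding convention for points, so that the ``same position in both poses'' hypothesis is stated as an honest equation in $\mathbb{H}$ rather than informally. Once that is fixed, the degenerate screw cases $\theta \in \{0,\pi\}$ are the only place where one must argue separately, and there the statement is essentially vacuous. No genuinely hard estimate or deep structural fact is needed — the subgroup structure of pivoting motions in $SE(3)$, emphasized in the surrounding text, is exactly what makes the power map $D_{12} \mapsto D_{12}^{\tau}$ preserve the screw axis and hence the fixed point.
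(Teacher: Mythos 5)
Your proposal is correct and follows essentially the same route as the paper's proof: both reduce to the relative displacement $D_{12}=D_1^{-1}D_2$, use the fixed-point hypothesis to show the screw pitch $d=\boldsymbol{p}\cdot\boldsymbol{l}$ vanishes with $\boldsymbol{v}$ lying on the screw axis, and conclude from the fact that $D_{12}^{\tau}$ shares that axis and angle scaled by $\tau$. The only differences are cosmetic — you make explicit the conjugation back by $D_1$ and the on-axis verification that the paper leaves implicit (and note that the point quaternion should be the pure quaternion $Q_v=(0,\boldsymbol{v})$, not $(1,\boldsymbol{v})$).
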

\begin{proof}
Let $Q_v = (0,\boldsymbol{v}) \in \mathbb{H}$ be a pure quaternion representing the point $\boldsymbol{v}$. Since the point $\boldsymbol{v}$ has the same position in both poses $D_1$ and $D_2$, therefore
\begin{align}
D_1(1+\epsilon Q_v)D_1^\dag & =  D_2(1+\epsilon Q_v)D_2^\dag, \\
\therefore  \,\, Q_{p2} - Q_{p1} & = Q_{R1} Q_v Q_{R1}^*  - Q_{R2} Q_v Q_{R2}^*.
\label{eq:Qp2_Qp1}
\end{align}
Therefore, the transformation from $D_1$ to $D_2$ is derived as
\begin{equation}
\begin{split}
        D_{12} & = D_1^{*}D_2 = Q_{R1}^* Q_{R2} + \frac{\epsilon}{2}Q_{R1}^* (Q_{p2} - Q_{p1}) Q_{R2}\\
        &= Q_{R1}^* Q_{R2} + \frac{\epsilon}{2}(Q_v Q_{R1}^* Q_{R2} - Q_{R1}^* Q_{R2} Q_v).
\end{split}
\label{eq:D_1D_2_}
\end{equation}
% On the other hand, the transformation from $D_1$ to $D_2$ is
% \begin{equation}
% \begin{split}
%         D_{12} & = D_1^{-1}D_2 = D_1^{*}D_2 \\
%         &= Q_{R1}^* Q_{R2} + \frac{\epsilon}{2}Q_{R1}^* (Q_{p2} - Q_{p1}) Q_{R2}.
% \end{split}
% \label{eq:D_1D_2}
% \end{equation}
% By substituting (\ref{eq:Qp2_Qp1}) in (\ref{eq:D_1D_2}), we have
% \begin{equation}
%     D_{12} = Q_{R1}^* Q_{R2} + \frac{\epsilon}{2}(Q_v Q_{R1}^* Q_{R2} - Q_{R1}^* Q_{R2} Q_v).
% \label{eq:D_1D_2_2}
% \end{equation}
By representing the rotation $Q_{R1}^* Q_{R2}$ as $(\cos\frac{\theta}{2}, \boldsymbol{l} \sin\frac{\theta}{2}) \in \mathbb{H}$ (where $\boldsymbol{l}$ is a unit vector along the  screw axis and $\theta$ is rotation about the screw axis), ($\ref{eq:D_1D_2_}$) can be simplified as
\begin{equation}
    D_{12} = (\cos\frac{\theta}{2}, \boldsymbol{l}\sin\frac{\theta}{2}) + \epsilon (0, \boldsymbol{v} \times \boldsymbol{l} \sin\frac{\theta}{2}) = P + \epsilon Q.
\label{eq:D_12}
\end{equation}
The translation $d$ along the screw axis is determined by $d = \boldsymbol{p} \cdot \boldsymbol{l}$ where $\boldsymbol{p}$ is derived from $2QP^* = (0, \boldsymbol{p})$. By using (\ref{eq:D_12}),
\begin{equation}
\boldsymbol{p} = \boldsymbol{v} \times \boldsymbol{l} \sin\frac{\theta}{2} \cos\frac{\theta}{2} - (\boldsymbol{v} \times \boldsymbol{l}) \times \boldsymbol{l} \sin^2\frac{\theta}{2},
\label{eq:p}
\end{equation}
and $d = \boldsymbol{p} \cdot \boldsymbol{l} = 0$. Therefore, the transformation $D(\tau)$ is a pure rotation about the fixed point $\boldsymbol{v}$ on the screw axis.
\end{proof}

Furthermore, {\em when using multiple manipulators to pivot an object and assuming that there is no relative motion at the hand-object contact, the motion of each end-effector can be obtained independently by ScLERP using a shared interpolation parameter. This will ensure that the  constraint that the relative end-effector poses of the manipulators are unchanged during motion is maintained  without explicitly encoding it (this follows from Lemma $3$ of~\cite{Sarker2020} and so we do not repeat the formal statements and proofs here)}. In other words, it is guaranteed that the manipulators and the object always form a closed chain during the entire motion without explicitly encoding the constraint. In the next section, we use pivoting as a primitive motion for motion planning between any two given poses in $\mathbb{T}$-space.

%%%%%%%%%%%%%%%%%%%%%%%%%%%%%%%%%%%%%%%%%%%%%%%%%
\section{Motion Planning in Task Space}
\label{sec:MotionPlanningTS}
To manipulate a polyhedral object between any two given poses $\mathcal{C}_O$ and $\mathcal{C}_F$ while maintaining contact with the environment, multiple pivoting moves can be combined by defining a set of appropriate \textit{intermediate poses}. The set of the intermediate poses $\mathcal{C}_I = \{\mathcal{C}_I^1,\mathcal{C}_I^2, \cdots, \mathcal{C}_I^h \}$ are defined in a way that the motion between any two successive poses $\{\mathcal{C}_O, \mathcal{C}_I, \mathcal{C}_F \}$ can be represented by a single constant screw pivoting move. 
%In other words, the intermediate poses are used to switch between different subgroups of $SE(3)$.
%Since the ScLERP can provide a smooth screw motion in each of these primitive motions,
Thus, we can conveniently represent the motion between any two given object poses $\mathcal{C}_O$ and $\mathcal{C}_F$ in $SE(3)$ by using ScLERP to ensure that the object maintains its contact with the environment continuously. The object manipulation strategies on a flat surface can be categorized into 3 cases; (\textbf{Case I}) If $\mathcal{C}_O$ and $\mathcal{C}_F$ have a contact edge or vertex in common, the final pose can be achieved by pivoting the object about the common point or edge (Fig.~\ref{Fig:SRP}-\subref{Fig:SRP_T},\subref{Fig:SRP_P}).
% (\textbf{Case II}) If $\mathcal{C}_O$ and $\mathcal{C}_F$ have an edge in common, the final pose can be archived by tumbling the object about the common edge (Fig.~\ref{Fig:SRP}-c).
%or rotating the object by sliding about the common edge (Fig.~\ref{Fig:SRP}-b). In this case, if sliding is not allowed (due to presence of a rough surface or a small obstacle), the final pose can be achieved be defining an arbitrary intermediate poses $\mathcal{C}_I$ and using two successive pivoting motions as shown in Fig.~\ref{Fig:SRP}-b.
(\textbf{Case II}) If $\mathcal{C}_O$ and $\mathcal{C}_F$ do not have any edge or vertex in common but the same face of the object is in contact with the environment in both poses, different strategies can be considered. One of the strategies is using a sequence of pivoting motions about the object edges (tumbling). In this motion, the travel distance is discrete and depends on the object size and it may not be suitable for manipulating some objects like furniture.
% Therefore, the final pose must be achieved by using the other primitive motions.
% or the objects that should not be held upside-down.
% \orange{Another strategy is sliding motion which may not be appropriate for manipulating heavy objects on rough surfaces due to increase in the contact frictional forces.}
In this situation, we can manipulate the object is \textit{object gaiting} (Fig.~\ref{Fig:IntermediateConfigs_Edges}-a) which is defined as a sequence of pivoting motions on two adjacent object vertices in contact
(see \ref{subsec:IntermediatePosesObjectGaiting} and \ref{subsec:GaitPlanning}).
% \orange{This kind of manipulation is similar to what human do while manipulation bulky or heavy objects like furniture and the object motion resembles human walking, where the object vertices in contact behave like the object %virtual% Although this motion is generally a convenient motion, feet.}
(\textbf{Case III}) If
% $\mathcal{C}_O$ and $\mathcal{C}_F$ do not have any edge or vertex in common and
the adjacent or opposite faces of the object are in contact with the environment in both poses, a combination of pivoting and gaiting is required to achieve the final pose as shown in Fig.~\ref{Fig:Examples}. Depending on the manipulators' physical limitations, object gaiting is more efficient only when a specific face of the object is in contact with the environment. For instance, manipulation on the longer edge of the cuboid shown in Fig.~\ref{Fig:IntermediateConfigs_Edges}-a may be more difficult than two other edges.
% \orange{Thus, in this case, we may need to first pivot the object, manipulate the object by gaiting to where the object has a common edge or vertex with the final pose, and finally pivot the object again to reach the final pose.}

\begin{figure}[!htbp]
    \centering
    \subfloat[]{\includegraphics[scale=0.26]{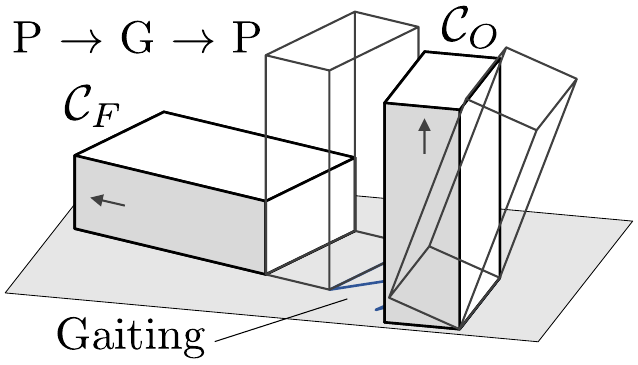}}
    \subfloat[]{\includegraphics[scale=0.26]{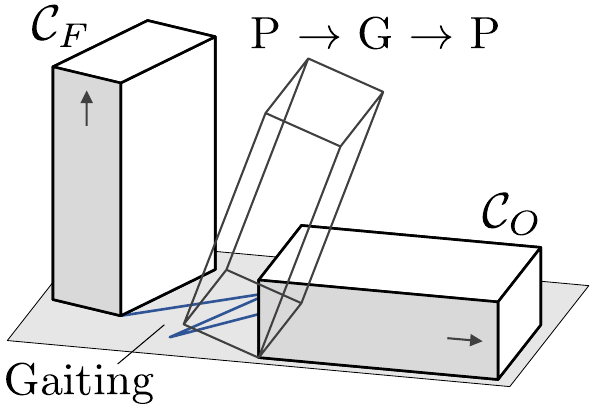}}
    \subfloat[]{\includegraphics[scale=0.26]{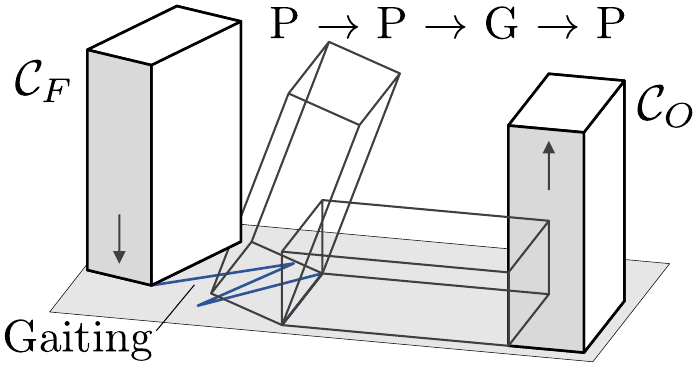}} %\\
    %\subfloat[]{\includegraphics[scale=0.25]{}}
    \caption{Examples of the object manipulation with primitive motions when two adjacent (a,b) or opposite (c) object faces are in contact with the environment in initial and final poses (P: Pivoting, G: Gaiting).}
\label{Fig:Examples}
\end{figure}

\subsection{Intermediate Poses in Object Gaiting}
\label{subsec:IntermediatePosesObjectGaiting}
%\orange{In order to find the sequence of poses required for object gaiting, we need to first determine the intermediate poses in this motion.}
Let us assume that the axes of the body frame $\{b\}$ are parallel to the cuboid edges and the inertia frame $\{s\}$ is attached to the supporting plane such that $Z$-axis is perpendicular to the plane (Fig.~\ref{Fig:ObjectGaiting}). Three successive intermediate poses while pivoting about the vertex $a$ are shown in Fig.~\ref{Fig:ObjectGaiting}-a,b. The object is initially in the pose $\mathcal{C}_I^1 = (R_{1}, p_{1})$ (Fig.~\ref{Fig:ObjectGaiting}-a) holding on the contact edge $ab$. The pose $\mathcal{C}_I^2 = (R_{2}, p_{2})$ (Fig.~\ref{Fig:ObjectGaiting}-a) is achieved by rotating the object by a small angle $\beta$ along the edge passing through the vertex $a$; therefore, $R_{2} = R_{1} R_{x}({-\beta})$ and only the vertex $a$ is in the contact. Finally, the pose $\mathcal{C}_I^3 = (R_{3}, p_{3})$ (Fig.~\ref{Fig:ObjectGaiting}-b) is determined by rotating $\mathcal{C}_I^1$ by an angle $\alpha$ along $Z$-axis about the vertex $a$; therefore, $R_{3} = R_Z({\alpha}) R_{1}$ and the edge $ab$ is again in contact with the environment. This procedure can be also repeated for the vertex $b$. By using ScLERP between these intermediate poses, we can obtain a smooth motion for object gaiting while maintaining contact with the environment. Note that the angles $\gamma$ and $\beta$ can be chosen arbitrarily and we do so in this paper. However, if there are secondary objectives like avoiding small obstacles on the ground while pivoting, $\gamma$ and $\beta$ can be chosen to satisfy those objectives.

% Note that the angle $\gamma$ can be determined such that the object weight pass through the contact edge $ab$ to reduce the required contact forces during the manipulation. Note that the angle $\beta$ can be adjusted during the motion to allow the object to pass over small obstacles in the environment.

% Using the intermediate poses ScLERP can be used to derive a smooth motion for object gaiting while maintaining contact with the environment. 

\subsection{Gait Planning}
\label{subsec:GaitPlanning}
To manipulate the object from an initial pose $\mathcal{C}_O$ to a final pose $\mathcal{C}_F$ by object gaiting, a sequence of the rotation angle $\alpha$ between these two poses should be properly determined (Fig.~\ref{Fig:IntermediateConfigs_Edges}-a). Let $k$ be the number of required edge contacts and $\bm{\alpha} = [\alpha_1, \cdots, \alpha_k] \in \mathbb{R}^k$ be the angles between the contact edges as shown in Fig.~\ref{Fig:IntermediateConfigs_Edges}-b. We can find $\bm{\alpha}$ using an optimization problem as
\begin{equation}
\begin{aligned}
&{\underset {\bm{\alpha}}{\operatorname {minimize}}}&&  \lVert \bm{\alpha} \rVert \\[-8pt]  % \quad \text{or}  \quad \mathrm{std}(\bm{\alpha})\\
&\operatorname {subject\;to} && \boldsymbol{x} = \pm w \sum_{i=1}^k{\left( -1 \right) ^{i}\left[ \begin{array}{@{\mkern0mu} c @{\mkern0mu}}
	\cos \left( \alpha _O \pm \bar{\alpha} \right)\\
	\sin \left( \alpha _O \pm \bar{\alpha} \right)\\
\end{array} \right]},\\[-5pt]
&&& \alpha _{F} - \alpha _O  = \pm \sum_{i=1}^k{\left( -1 \right) ^{i}\alpha_i },\\
&&& \left| \alpha _i \right| \leq \alpha_{\text{max}},\ \ i=1,...,k,
\end{aligned}
\label{eq:GaitPlanning}
\end{equation}
where $\bar{\alpha} = \sum_{j=1}^i{\left( -1 \right) ^{j}\alpha _j }$, $\alpha_{\text{max}}$ is the maximum allowed rotation angle, $w$ is the length of the edge contact, and $\alpha_{O}$ and $\alpha_{F}$ represent the orientation of the contact edges $a_O b_O$ and $a_F b_F$ relative to $X$-axis, respectively. The negative sign correspond to the case that the first gait begins from the edge $a_O$, where $\boldsymbol{x} = \boldsymbol{b}_{F} - \boldsymbol{a}_{O}$ if $k$ is an odd number and $\boldsymbol{x} = \boldsymbol{a}_{F} - \boldsymbol{a}_{O}$ if $k$ is an even number, moreover, the positive sign correspond to the case that the first gait begins from the edge $b_O$, where $\boldsymbol{x} = \boldsymbol{a}_{F} - \boldsymbol{b}_{O}$ if $k$ is an odd number and $\boldsymbol{x} = \boldsymbol{b}_{F} - \boldsymbol{b}_{O}$ if $k$ is an even number. $\boldsymbol{a}_{O}$, $\boldsymbol{b}_{O}$, $\boldsymbol{a}_{F}$, $\boldsymbol{b}_{F} \in \mathbb{R}^2$ are the coordinates of the contact vertices in $\mathcal{C}_O$ and $\mathcal{C}_F$ poses along $X$- and $Y$-axis of the frame $\{s\}$. In the optimization problems (\ref{eq:GaitPlanning}), the first constraint represents the distance of the last contact vertex ($a_F$ or $b_F$) relative to the first contact vertex ($a_O$ or $b_O$) in $X$ and $Y$ directions. The second constraint represents the relative angle between the contact edges $a_O b_O$ and $a_F b_F$, and the last constraint considers the manipulators' limitations to rotate the object.
% Note that in (\ref{eq:GaitPlanning}), the objective function $\lVert \bm{\alpha} \rVert$ can also be replace with $\mathrm{std}(\bm{\alpha})$.
In order to find the feasible minimum number of edge contacts, $k$, required to manipulate the object between two poses $\mathcal{C}_O$ and $\mathcal{C}_F$, we need to repeat (\ref{eq:GaitPlanning}) for different values of $k$.

\begin{figure}[!htbp]
    \centering
    \subfloat[]{\includegraphics[scale=0.5]{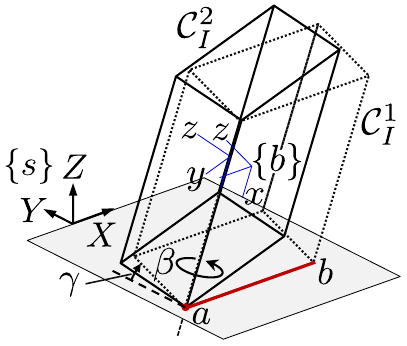}} \quad \quad
    \subfloat[]{\includegraphics[scale=0.5]{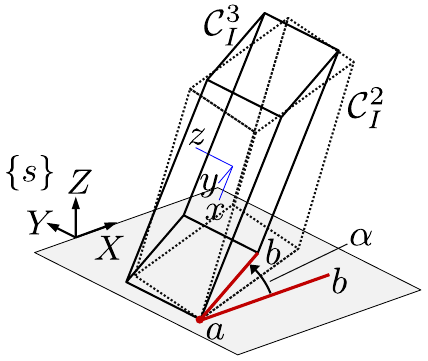}}
    \caption{Intermediate poses in object gaiting while pivoting.} % about the vertex $a$
\label{Fig:ObjectGaiting}
\end{figure}

\begin{figure}[!htbp]
    \centering
    \subfloat[]{\includegraphics[scale=0.4]{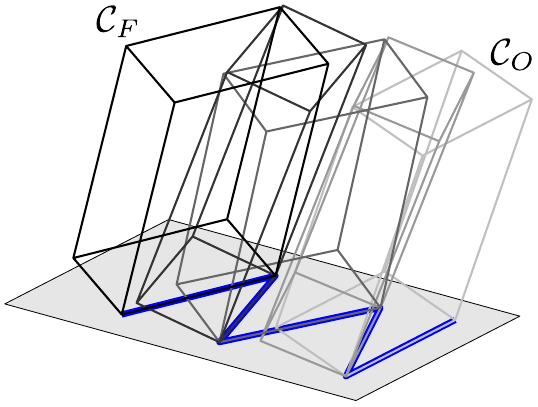}} \quad \,
    \subfloat[]{\includegraphics[scale=0.34]{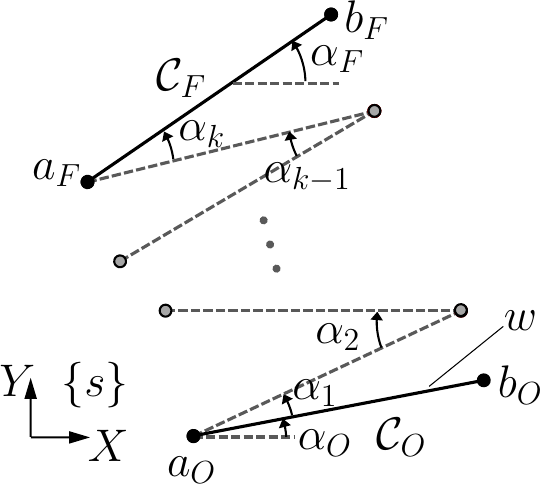}}
    \caption{A sequence of contact edges for object gaiting between two poses $\mathcal{C}_O$ and $\mathcal{C}_F$ when the first gait begins from the edge $a_O$.}
\label{Fig:IntermediateConfigs_Edges}
\end{figure}

%%%%%%%%%%%%%%%%%%%%%%%%%%%%%%%%%%%%%%%%%%%%%%%%%
% \section{Jacobian-based Inverse Kinematics}
\section{Mapping from $\mathbb{T}$-space to $\mathbb{J}$-space}
Since it is assumed that the transformation between the end-effector frame $\{e_i\}$ and contact frame $\{c_i\}$ remains constant,
% during the manipulation,
after planning a path in the $\mathbb{T}$-space, we can compute the end-effector poses $\mathcal{E}_i$ for each object intermediate pose.
% $\{\mathcal{C}_O, \mathcal{C}_I, \mathcal{C}_F \}$.
Then, we use the ScLERP for each of these end-effector poses individually with a shared screw parameter. To find the joint angles of the manipulators in $\mathbb{J}$-space, we use the (weighted) pseudoinverse of the manipulators' Jacobian \cite{Klein1983}.
% Since it is assumed that the transformation between the end-effector frame $\{e_i\}$ and $\{c_i\}/\{b\}$ remains constant during the manipulation, the end-effector path in $SE(3)$ can be easily derived using the same object motion screw parameters \cite{daniilidis1999hand}, \cite{kavan2006dual}.
% Thus, after planning a path in the $\mathbb{T}$-space between two given poses, we use the (weighted) pseudoinverse of the manipulators' Jacobian \cite{Klein1983} to find the joint angles in the $\mathbb{J}$-space. 
Let $\boldsymbol{\Theta}_{t}$ and $\boldsymbol{\chi}_{t}$ be the vector of joint angles and end-effector’s pose at the step $t$, respectively.
For each manipulator, given the current end effector pose $\boldsymbol{\chi}_{t}$ and the target end effector pose $\boldsymbol{\chi}_{t+1}$ (obtained from ScLERP) we have the corresponding joint angles $\boldsymbol{\Theta}_{t+1}$ as
\begin{equation}
    \boldsymbol{\Theta}_{t+1} = \boldsymbol{\Theta}_{t} + \lambda \mathbf{J}(\boldsymbol{\Theta}_{t}) (\boldsymbol{\chi}_{t+1} - \boldsymbol{\chi}_{t}),
    \label{eq:IK}
\end{equation}
where  $0 < \lambda \le 1$ is a step length parameter (see \cite{Sarker2020} for a complete algorithm). Here $\mathbf{J}$ is the (weighted) pseudo-inverse of the manipulator Jacobian. By using (\ref{eq:IK}) between any two successive poses in $\{\mathcal{C}_O, \mathcal{C}_I, \mathcal{C}_F \}$, $\boldsymbol{\Theta}^i(j)$ ($j=1,\cdots,m$) for the $i$-th manipulator is computed. Note that we have just presented the simplest way of converting from task space to joint space. To avoid joint limits, we can augment Equation~\ref{eq:IK} with additional terms on the right hand side that belongs to the null space of the manipulator Jacobian. %\orange{Note that an alternative method to map from $\mathbb{T}$-space to $\mathbb{J}$-space by taking into account the joint limits are presented in} \red{[Khatib's paper?]}.

\section{Implementation and Results}
In this section, we briefly present the simulation results for manipulating a heavy cuboid object on a flat surface and over a step.
Videos of our simulations are presented in the video attachment to the paper.
%In these scenarios, we use both arms of a Baxter robot for object manipulation and after planning motion for the object and end-effectors in $\mathbb{T}$-space, we compute the joint angles in $\mathbb{J}$-space and consequently, the required contact forces and joint torques.

%\subsection{Manipulation on a Flat Surface}
\noindent
\textbf{Manipulation on a Flat Surface}: In this example, we plan motion to reorient a heavy object from an initial pose $\mathcal{C}_O$ to a final pose $\mathcal{C}_F$, in its vicinity, by object gaiting as shown in Fig.~\ref{Fig:Example_Flat}-\subref{Fig:Example_Flat_Object}.
Existing planning algorithms~\cite{yoshida2010pivoting} cannot efficiently solve this problem, because their motion plan is essentially restricted to move on Reeds and Shepp curves.
% considers the pivoting motion as a planar motion on Reeds and Shepp curves.
By using the proposed optimization problem (\ref{eq:GaitPlanning}), we can find the minimum number of contact edges required to manipulate the object between these two poses. The simulation results for $a_O = [0, \, 0]$, $\alpha_{O} = 0^{\circ}$, $a_F = [0.13, \, 0.13]$ m, $\alpha_F = -80^{\circ}$,  $w = 0.2$ m, $\alpha_{\text{max}} = 35^{\circ}$ are shown in Fig.~\ref{Fig:Example_Flat}-\subref{Fig:Example_Flat_Edges}. As shown, at least 3 contact edges $\bm{\alpha} = [-10.55^{\circ}, 29.56^{\circ}, -12.63^{\circ}, 27.25^{\circ}]$ (in total 10 intermediate poses) are required to reach the final pose by starting pivoting from the edge $a$.

% after tumbling the object about the edge $ab$ by $\gamma=20^\circ$.

% This example shows the power of our object gaiting method for manipulation of an object between any two given poses. 

\begin{figure}[!htbp]
    \centering
    \subfloat[]{\includegraphics[scale=0.42]{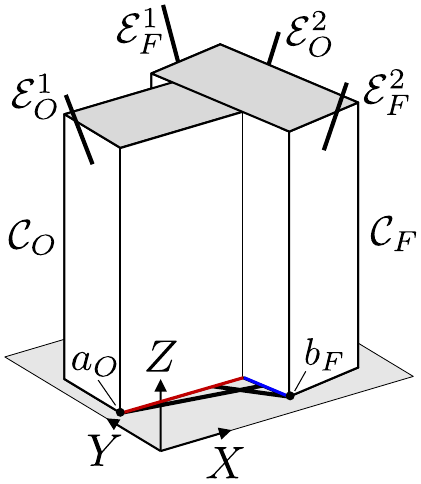}\label{Fig:Example_Flat_Object}} \qquad
    \subfloat[]{\includegraphics[scale=0.32]{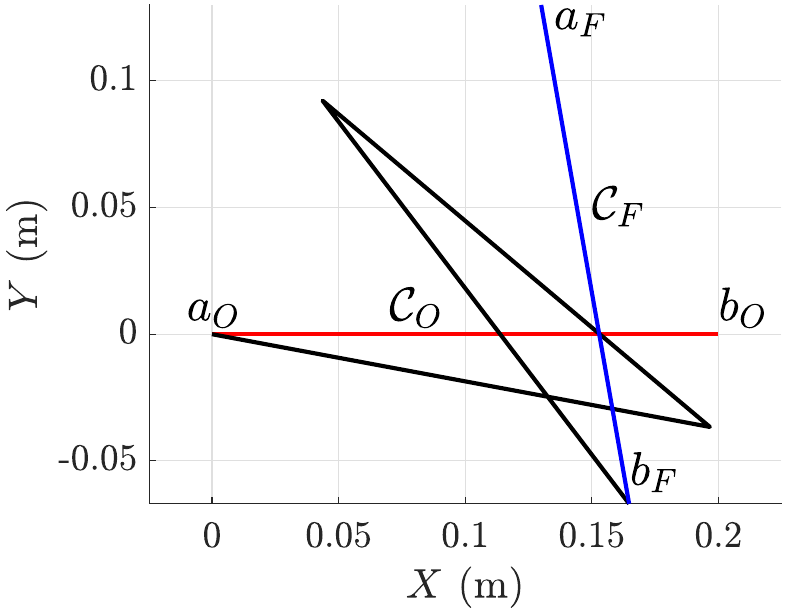}\label{Fig:Example_Flat_Edges}}
    \caption{Object gaiting on a flat surface.}
\label{Fig:Example_Flat}
\end{figure}
% $\alpha_1 = -10.55^{\circ}$, $\alpha_2 = 29.56^{\circ}$, $\alpha_3 = -12.63^{\circ}$, $\alpha_4 = 27.25^{\circ}$
%\subsection{Manipulation over a Step}
\noindent
\textbf{Manipulation over a Step}: In this example, we plan motion and force to manipulate a heavy object of uniform density over a step (Fig.~\ref{Fig:Example_Step}) by both 7-DoF arms of the Baxter robot. 
%with the help of a two-armed Baxter robot, with each arm having 7 degrees-of-freedom.
The computed motion plan includes 3 stages: (1) pivoting about the object edge ($\mathcal{C}_I^1$), (2) pivoting about the vertex $v$ ($\mathcal{C}_I^2$), where the object face and only the vertex $v$ are in contact with the environment, (3) changing the location of the end-effectors' contacts and pivoting about the step edge ($\mathcal{C}_F$). Thus, there are two intermediate poses $\{\mathcal{C}_I^1,\mathcal{C}_I^2\}$. For all the 3 stages, we assume that the contact locations are known beforehand. 
% We consider two reference frames, $\{C_1\}$ and $\{C_2\}$ at the object--end-effector contact. The object-environment contact has been modelled with two point contacts at the vertices contact edge and the reference frames, $\{E_1\}$ and $\{E_2\}$. In the second stage when we have to pivot about a vertex, we consider only one object-environment contact at $\{E_2\}$.
% We have implemented the Motion Planning Algorithm described in \cite{Sarker2020} along with GFOP formulation described in \cite{Patankar2020} with the objective function of minimizing the maximum magnitude of normal force at the object--end-effector contact
%while the heavy cuboid object is being lifted quasi-statically. 
We implemented $\mathbb{T}$-space planning, conversion to $\mathbb{J}$-space, and force planning
% method described in Section.~\ref{sec:force_plan}
to find the minimum required normal forces $f_{c_{n,1}}$ and $f_{c_{n,2}}$ at both object--end-effector contacts $\{c_1\}$ and $\{c_2\}$ in each motion stage.
The simulation parameters are given in Table~\ref{Table:cuboid_parameters}. Moreover, the friction coefficients at the manipulator contact is $0.3$ and at the environment contact is $0.4$. The optimization formulation has been implemented in MATLAB and solved using the CVX toolbox \cite{cvx} with the default solver (SDPT3) on a PC with 1.8GHz processor and 16GB RAM. In Fig.~\ref{Fig:Joint_Angles}, the variation of joint angles in the right and left arms of the Baxter robot in the $3$ stages of manipulating the object is shown. Note that as mentioned above, in stage (3), we change the end-effector contact locations before pivoting the object about the step edge. This has been reflected in the `Stage 3' of Fig.~\ref{Fig:Joint_Angles}(~\subref{Fig:right_arm}) and (~\subref{Fig:left_arm}), as the joint angle values, for all the $7$ joints of the Baxter robot, change abruptly as compared to the first two stages. In Fig.~\ref{Fig:contact_force_results}, the variations of the normal contact forces with respect to the number of iterations to reach the goal pose in the $3$ stages of object manipulation over a step are shown. In stage 1, $f_{c_{n,1}}$ and $f_{c_{n,2}}$ first decrease and become negligible at a particular object tilting angle where the weight of the object passes through its support edge, and then increases. In stage 2, since the motion is not symmetric, there is a difference between the right and left end-effector normal contact forces in order to balance the the object weight. In stage 3, the object-environment contact points are initially located closer to the object center of mass; thus, less contact forces are initially required and by pivoting the object, these forces increase.

\begin{table}[!htbp]
    \caption{Simulation Parameters for Manipulation over a Step.}
    \centering
    \setlength{\tabcolsep}{1mm}
    \renewcommand{\arraystretch}{1.2}
    \begin{tabular}{c|c}
        \hline
        \textbf{Parameter}&\textbf{Value}  \\
        \hline 
        \hline
        Weight & $20$ (N) \\
        \hline 
        Object \& Step Dimensions  & $0.15$ m $\times$ $0.15$ m $\times$ $0.1$ m, $H_s = 0.07$ (m) \\  % $L = 0.15$ (m), $W = 0.15$ (m), $H = 0.1$ (m)
        %\hline
        %Step Height &   $H_s = 0.07$ (m) \\ % $L = 0.65$ (m),
        \hline
    \end{tabular}
    \label{Table:cuboid_parameters}
\end{table}

\begin{figure}[!htbp]
    \centering
    \includegraphics[scale=0.47]{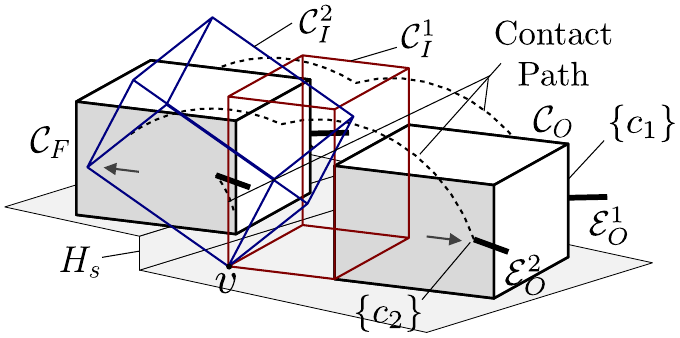}
    \caption{Object manipulation over a step.}
\label{Fig:Example_Step}
\end{figure}

% \begin{figure}[!htbp]
%     \centering
%     \subfloat[]{\includegraphics[scale=0.4]{}\label{Fig:joint_angle_right_arm}} \\
%     \subfloat[]{\includegraphics[scale=0.4]{}\label{Fig:joint_angle_left_arm}}
%     \caption{.}
% \label{Fig:Example_Joint_Angles}
% \end{figure}

\begin{figure}[!htbp]
    \centering
    \subfloat[]{\includegraphics[scale=0.25]{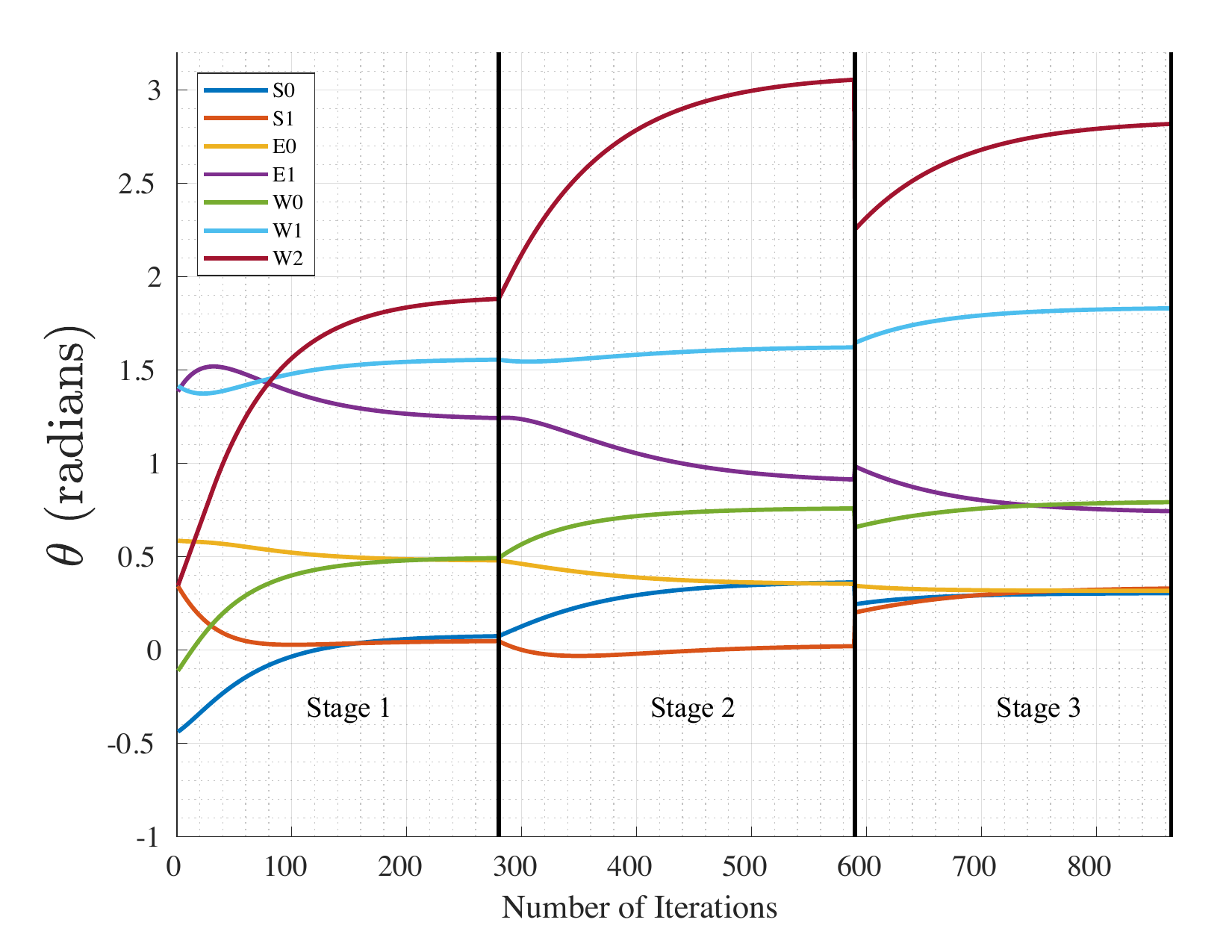}\label{Fig:right_arm}} \\
    \subfloat[]{\includegraphics[scale=0.25]{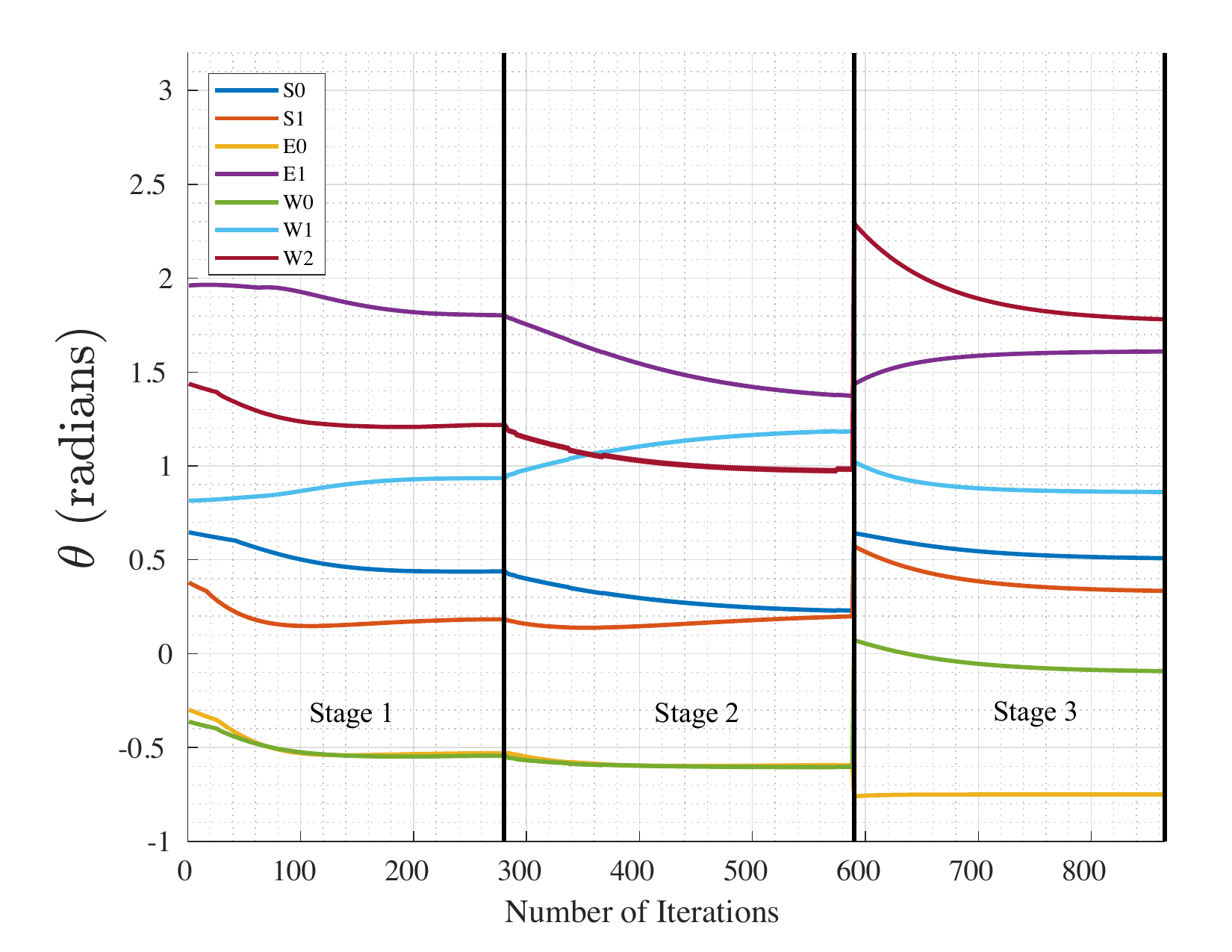}\label{Fig:left_arm}}
    \caption{Joint angle changes of the two arms, (a) right arm, (b) left arm.}
\label{Fig:Joint_Angles}
\end{figure}

\begin{figure}[!htbp]
    \centering
    \includegraphics[scale=0.30]{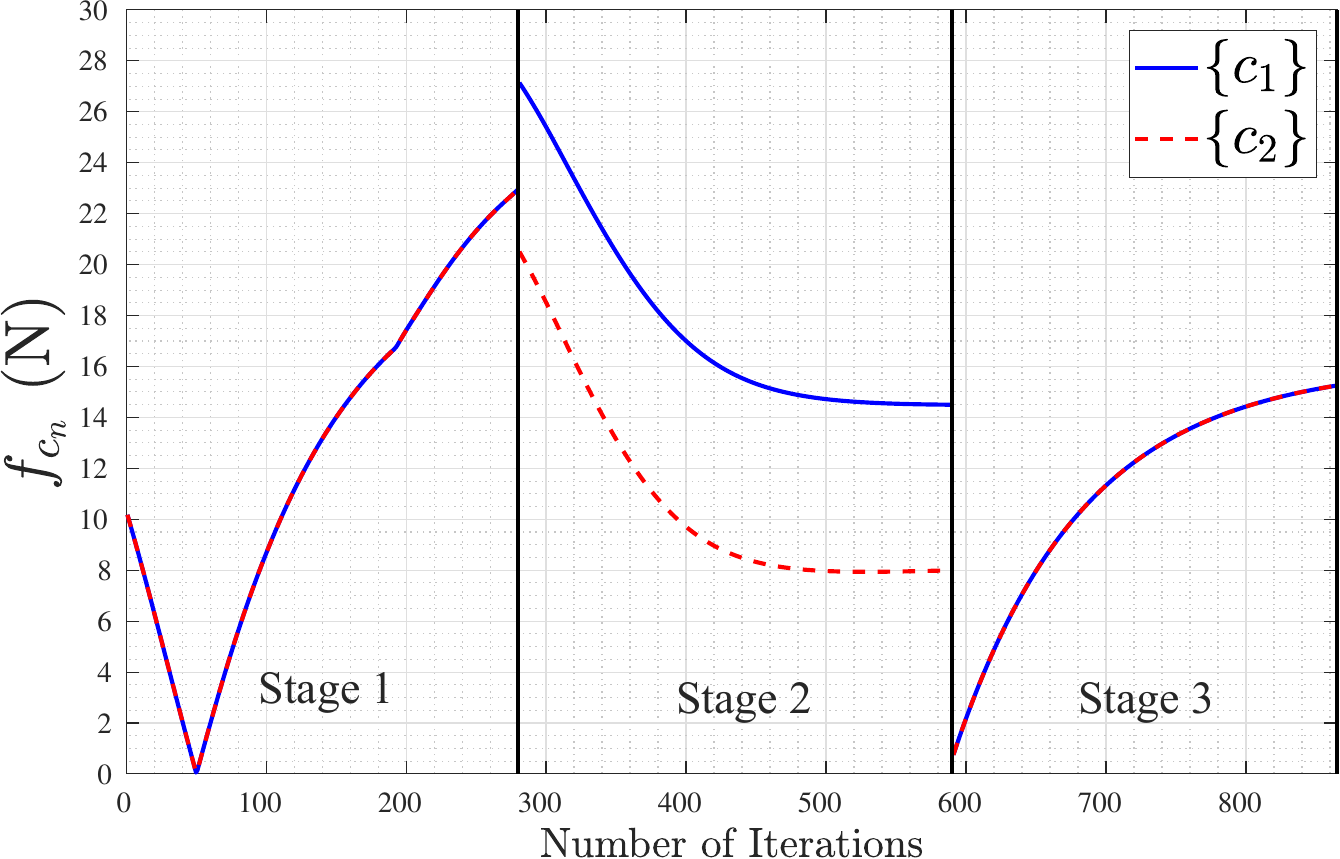}
    \caption{The normal contact forces at $\{c_1\}$ and $\{c_2\}$ where the object weight is $m = 2$kg, maximum joint torque for shoulder and elbow joints is $\tau_{\text{max}} = 50$Nm, and maximum joint torque for  wrist joints is $\tau_{\text{max}} = 15$Nm. The time taken for one iteration of force and motion planning is 1.5 seconds.}
\label{Fig:contact_force_results}
\end{figure}

%%%%%%%%%%%%%%%%%%%%%%%%%%%%%%%%%%%%%%%%%%%%%%%%%
\section{Conclusion and Future Work}
In this paper, we have proposed a novel approach for manipulating heavy objects while using a sequence of pivoting motions. We have implemented our proposed motion and force planning on two different scenarios; reorienting an object using gaiting and also manipulating a heavy object over a step. Given the initial and final poses of the object, we first compute the required intermediate poses. These poses can be derived by an optimization problem which computes the optimal values of the rotation angles between contact edges while \textit{object gaiting}. Then, by using ScLERP, we can interpolate between these intermediate poses while satisfying all the task-related constraints.
% i.e., maintaining contact with the environment.
Using RMRC we can map the task-space based plan to the joint-space allowing us to compute the contact forces and the joint torques required to manipulate the object. Future work includes the relaxation of the quasi-static assumption for the force planning and experimental evaluation of the proposed approach.

% +++++++++++++++++
% If the initial and final poses are in a subgroup of $SE(3)$, the path generated by the ScLERP will always be in the same subgroup.

% For objects with curved edges or surfaces, e.g., a cylinder, \textit{rolling} is also a primitive motion. Although it is a 1-DOF motion, it cannot be directly represented by ScLERP.
% Rotation while sliding...

% For non-quasi-static manipulation we need to define a trajectory and consider inertial forces (mass, acceleration,...)

%%%%%%%%%%%%%%%%%%%%%%%%%%%%%%%%%%%%%%%%%%%%%%%%%
%\addtolength{\textheight}{-10.5cm}
% This command serves to balance the column lengths on the last page of the document manually. It shortens the textheight of the last page by a suitable amount. This command does not take effect until the next page so it should come on the page before the last. Make sure that you do not shorten the textheight too much.

%%%%%%%%%%%%%%%%%%%%%%%%%%%%%%%%%%%%%%%%%%%%%%%%%
%\section*{APPENDIX}

%\section*{ACKNOWLEDGMENT}

%%%%%%%%%%%%%%%%%%%%%%%%%%%%%%%%%%%%%%%%%%%%%%%%%
\bibliographystyle{IEEEtran}
\bibliography{References}

\end{document}